\documentclass{article}
\usepackage{arxiv}

\usepackage{amssymb}
\usepackage{amsthm}
\usepackage{enumerate}
\usepackage{url}
\usepackage{multirow}
\usepackage{float}

\usepackage{mathtools}
\usepackage{algorithm}
\usepackage{algorithmic}
\usepackage[utf8]{inputenc} % allow utf-8 input
\usepackage[T1]{fontenc}    % use 8-bit 
\usepackage{url}            % simple URL typesetting
\usepackage{booktabs}       % professional-quality tables
\usepackage{amsfonts}       % blackboard math symbols
\usepackage{nicefrac}       % compact symbols for 1/2, etc.
\usepackage{microtype}      % microtypography
\usepackage{lipsum}
\usepackage{graphicx}

\usepackage{csquotes}

\usepackage{xspace}
\usepackage{caption}
\usepackage{subcaption}

\providecommand{\abs}[1]{\lvert#1\rvert}

\DeclareMathOperator*{\argmax}{arg\,max}
\DeclareMathOperator*{\argmin}{arg\,min}

\newtheorem{definition}{Definition}
\newtheorem{lemma}{Lemma}

\usepackage[sorting=none]{biblatex}
\addbibresource{elsarticle-bibliography.bib}

\title{A Topological Data Analysis Based Classifier}

\author{
 Rolando Kindelan \\
  Computer Science Department\\ 
  Faculty of Mathematical and Physical Sciences\\
  University of Chile\\ 
  851 Beauchef Av. Santiago de Chile, Chile. \\  
  Center of Medical Biophysics,\\
  Universidad de Oriente, Santiago de Cuba, Cuba.\\
  \texttt{rkindela@dcc.uchile.cl} \\
  %% examples of more authors
  \And 
 Jos\'e Fr\'ias \\
  Center for Research in Mathematics\\
  Jalisco S/N, Col. Valenciana, CP: 36023 Guanajuato, Gto., México.\\
  \texttt{frias4@cimat.mx} \\
  \And 
 Mauricio Cerda \\
  Integrative Biology Program\\
  Institute of Biomedical Sciences\\ 
  Biomedical Neuroscience Institute\\
  Center for Medical Informatics and Telemedicine\\
  Faculty of Medicine\\ 
  Universidad de Chile\\
  1027 Independencia Av., Santiago, Chile.\\
  \texttt{mauricio.cerda@uchile.cl} \\
  \And
 Nancy Hitschfeld \\
  Computer Science Department\\ 
  Faculty of Mathematical and Physical Sciences\\
  University of Chile\\ 
  851 Beauchef Av. Santiago de Chile, Chile \\
  \texttt{nancy@dcc.uchile.cl} \\
}

\begin{document}

%\footnote{This paper is under consideration at Pattern Recognition Letters Journal.}

\maketitle
\begin{abstract}
Topological Data Analysis is an emergent field that aims to discover the underlying dataset's topological information. Topological Data Analysis tools have been commonly used to create filters and topological descriptors to improve Machine Learning (ML) methods. This paper proposes a different Topological Data Analysis pipeline to classify balanced and imbalanced multi-class datasets without additional ML methods. %Our proposed method solves multi-class classification tasks without splitting them into multiple binary classification sub-problems. 
Our proposed method was designed to solve multi-class problems. It resolves multi-class imbalanced classification problems with no data resampling preprocessing stage. The proposed Topological Data Analysis-based classifier builds a filtered simplicial complex on the dataset representing high-order data relationships. Following the assumption that a meaningful sub-complex exists in the filtration that approximates the data topology, we apply Persistent Homology to guide the selection of that sub-complex by considering detected topological features. We use each unlabeled point's link and star operators to provide different sized and multi-dimensional neighborhoods to propagate labels from labeled to unlabeled points. The labeling function depends on the filtration entire history of the filtered simplicial complex and is encoded within the persistent diagrams at various dimensions. We select eight datasets with different dimensions, degrees of class overlap, and imbalanced samples per class. The TDABC outperforms all baseline methods classifying multi-class imbalanced data with high imbalanced ratios and data with overlapped classes. Also, on average, the proposed method was better than KNN and weighted-KNN and behaved competitively with SVM and Random Forest baseline classifiers in balanced datasets.
  
\end{abstract}  
%-------------------------------------------------------------------------
\keywords{Topological Data Analysis; Persistent homology; Simplicial Complex; Supervised Learning; Classification; Machine Learning}

%% \linenumbers

%% main text
\section{Introduction}
\label{sec:intro}

Classification is a Machine Learning (ML) task that employs known data labels to label data with an unknown category. Classification faces challenges such as high dimensionality, noise, and imbalanced data distributions. Topological Data Analysis (TDA) has been successful in reducing dimensionality, and it has demonstrated robustness to noise by inferring the underlying dataset's topology~\cite{Carlsson:Bulletin,EdelsbrunnerHarer2010}. However, how TDA can classify imbalanced datasets has not been explored.

%que se propone? (simplificiar)
This work proposes a method entirely based on TDA to classify imbalanced and noisy datasets. The fundamental idea is to provide multidimensional and multi-size neighborhoods around each unlabeled point. We use topological invariants computed through PH to detect appropriate neighborhoods. We use the neighborhoods to propagate labels from labeled to unlabeled points. A preliminary version of this work is available in~\cite{kindelan2021classification}.

PH is a powerful tool in TDA that captures the topological features in a nested family of simplicial complexes built on data, according to an incremental threshold value~\cite{EdelsbrunnerHarer2010}. These topological features are encoded, considering those values when they appear (born) and disappear or merge (died). The numerical difference between birth and death scales is called a topological feature's persistence. The evolution of the simplicial structure is encoded using high-level representations called barcodes and persistence diagrams~\cite{EdelsbrunnerHarer2010}. \par
 
Regarding the relation between PH and the classification problem, the existing approach considers hybrid TDA+ML methods, which combine topological descriptors with a conventional ML classifier. Topological descriptors are commonly built based on vectorized or summarized persistence diagrams and barcodes~\cite{ATIENZA2020107509}. Examples of these hybrid TDA+ML methods are: TDA+SVM for image classification ~\cite{tdaretina}, and TDA+k-NN, TDA+CNN and TDA+SVM for time series classification \cite{VenkataramanRT16, umeda2017, timeserie_tda_2016}. Self-Organized Maps were combined with PH tools to cluster and classify time series in the financial domain~\cite{MAJUMDAR2020113868}. 
\par

Approaches to address the imbalanced classification problem can be mainly classified into data level (also known as resampling) and algorithmic level approaches~\cite{Fernandez2018}.  
Data resampling methods balance data by augmenting or removing samples from minority or majority classes, respectively. The Synthetic Minority Oversampling Technique (SMOTE) is the conventional geometric approach to balance classes by oversampling the minority class~\cite{smote}. Multiple variations of SMOTE have been developed~\cite{smotesurvey}, including novel approaches such as the SMOTE-LOF, which takes into account the Local Outlier Factor~\cite{ASNIAR2021} to identify noisy synthetic samples. Furthermore, overlap samples from different classes have been a big issue in imbalance problems. Neighborhood under-sampling from the majority class on the overlapped region has been applied to achieve better results~\cite{Vuttipittayamongkol2020}. These heuristics are simple and can be combined with any classifier as they modify the training set, although they assume data points can always be discarded or generated.
In contrast, SVM, Random Forest, or neural networks adaptations modify their objective function to give higher relative importance to minority class samples~\cite{Ibrahim_2021}. More related to our proposed work, Zhang et al. ~\cite{Zhang2017} propose the Rare-class Nearest Neighbour (KRNN), which defines a dynamic neighborhood based on the inclusion of at least k positive samples.\par

Undersampling and oversampling techniques can be applied as a preprocessing stage of any classifier; however, it could be tricky to devise a winner resampling approach to apply in a multi-class imbalanced data classification problem. Having classes $A, B$, and $C$, class $A$ can be a majority class regarding class $B$, and $B$ can be, at the same time, a majority class concerning class $C$, making it hard to resample $B$. This scenario is known as the problem of multi-minority and multi-majority classes \cite{Fernandez2018}. A typical approach is to split the problem into multiple binary-imbalanced data classification subproblems and perform resampling per subproblem, but other issues may arise. Addressing the multi-class imbalanced classification problem has the advantage of considering the relationships between classes. On the contrary, we can lose information due to the binarization~\cite{Fernandez2018}. 

According to \cite[pp 253]{Fernandez2018}, the imbalance ratio is not the only cause of performance degradation on imbalance problems. Another big concern is the intrinsic data's complexity. When samples from different classes are linearly separable standard classifiers may behave well. Furthermore, state-of-the-art methods may fail where noise and overlapped classes complicate the classification. Gaining knowledge concerning data complexities may help create successful approaches to deal with imbalanced data classification problems. In this context, TDA has proven to be a well-established tool for understanding the data topology that can be applied to unravel the intrinsic complexities of data.  \par 
The work presented in our paper focuses on applying TDA directly to imbalanced data classification without data resampling. The same method is applied to balanced, binary, and multi-imbalanced data. Specifically, the main contributions of this paper are the following:

\begin{itemize}
\item A link-based label propagation method for filtered simplicial complexes.
\item A labeling function depends on the whole filtration history encapsulated within the persistent diagrams at various dimensions. 
\item Filtration values as local weights to ponder label contributions. 
\item PH is a key component for selecting a sub-complex from a filtered simplicial complex.
\item We propose a novel TDA approach to solving classification problems showing advantages for imbalanced datasets without further ML stages.
\end{itemize}

We conduct experiments in eight datasets covering different aspects like class overlapping (non easily separable classes), multiple imbalanced ratios, and high dimensions (more dimensions than samples). We compare the proposed method with k-NN, weighted-KNN, Linear SVM, and Random Forest baseline classifiers. The k-NN algorithm is one of the most popular supervised classification methods used in the basement of SMOTE techniques. The second baseline method is an enhanced version of k-NN, the weighted k-NN (wk-NN) especially suited for imbalanced datasets. Linear SVM and Random Forest, two popular classifiers, are also applied as algorithmic approaches to deal with imbalanced data. This document is organized into several sections. The fundamental concepts and mathematical foundations used in this work are presented in Section~\ref{scc:methods}. Section~\ref{scc:classifier} explains the concepts, algorithms, and methodology of the proposed classification method. Next, Section~\ref{scc:results} describes the experimental protocol to assess the proposed and baseline algorithms, and an analysis of results and the proposed solution is provided. Conclusions are presented in Section~\ref{scc:conclusions}.
%~*~~*~~*~~*~~*~~*~~*~~*~~*~~*~~*~~*~~*~~*~~*~~*~~*~~*~~*~~*~~*~~*~~*~~*~~*~
%%%~*~~*~~*~~*~~*~~*~~* MATHEMATICAL FOUNDATIONS *~~*~~*~~*~~*~~*~~*~~*~~*~~
%~*~~*~~*~~*~~*~~*~~*~~*~~*~~*~~*~~*~~*~~*~~*~~*~~*~~*~~*~~*~~*~~*~~*~~*~~*~
\section{Fundamental concepts}\label{scc:methods}

In this section, we introduce mathematical definitions to explain our proposed method; for a complete theoretical basis see~\cite{EdelsbrunnerHarer2010}. 

\subsection{Simplicial Complexes} \label{scc:math}

Simplicial complexes are combinatorial and algebraic objects, which can be used to represent a discrete space encoding topological features of the data space. Concepts related to simplicial complexes are defined briefly as follows: a q-simplex $ \sigma$ is the convex hull of $q + 1$ affinely independent points $\{s_0, \ldots, s_q\}\subset\mathbb{R}^n$, $q\leq n$. The set $\mathcal{V}(\sigma)$ is called the set of \emph{vertices} of $\sigma$ and the simplex $\sigma$ is generated by $\mathcal{V}(\sigma)$; this relation will be denoted by $\sigma=[s_{0},\dots,s_{q}]$. A q-simplex $\sigma$ has dimension $dim(\sigma)=q$ and it has $\abs{\mathcal{V}(\sigma)} = q + 1$ vertices. Given a q-simplex $\sigma$, a d-simplex $\tau$ with $0\leq d \leq q$ and $\mathcal{V}(\tau)\subseteq \mathcal{V}(\sigma)$; $\tau$ is called a \emph{d-face} of $\sigma$, denoted by $\tau \leq \sigma$,  and $\sigma$ is called a \emph{q-coface} of $\tau$, denoted by $\sigma \geq \tau$.  Note that the 0-faces of a q-simplex $\sigma$ are the points in $\mathcal{V}(\sigma)$, the 1-faces are line segments with endpoints in $\mathcal{V}(\sigma)$ and so forth.  A q-simplex has $\binom{q+1}{d+1}$ d-faces and $\sum_{d=0}^{q}\binom{q+1}{d+1} = 2^{q+1}-1$ faces in total. \par 
In order to define homology groups of topological spaces, the notion of simplicial complexes is central:

\begin{definition} [Simplicial complex]\label{def:complex}
A simplicial complex $\mathcal{K}$ in $\mathbb{R}^{n}$ is a finite collection of simplices in $\mathbb{R}^{n}$ such that:

\begin{itemize}
    \item $\sigma \in \mathcal{K} \mbox{ and } \tau \leq \sigma \implies \tau \in \mathcal{K} $.
    \item $\sigma_1, \sigma_2 \in \mathcal{K}\implies \sigma_1 \cap \sigma_2$ is a either a face of both $\sigma_1$ and $\sigma_2$ or empty.        
\end{itemize}
The dimension of $\mathcal{K}$ is $dim(\mathcal{K}) = max\{dim(\sigma) \mid \sigma \in \mathcal{K}\}$. The set $\mathcal{V}(\mathcal{K}) =\cup_{\sigma \in K} \mathcal{V}(\sigma)$
%\{\mathcal{V}(\sigma)\mid \sigma \in \mathcal{K}\}
is called the set of \emph{vertices} of $\mathcal{K}$.\par
\end{definition}

%Following, an abstract simplicial complex  is a set of finite non-empty subsets of a given space that represents a simplicial complex as in Definition \ref{def:complex}. In this extended context, a set of points $\{s_{0},\ldots,s_{q}\}\subset\mathbb{R}^{n}$ may not be affinely independent in order to represent an abstract $q$-simplex in an abstract simplicial complex.

%An abstract simplicial complex  is a set of finite non-empty sets of a given space that represents a simplicial complex as in Definition \ref{def:complex}. For instance, if $\mathcal{K}$ is a simplicial complex, then the set $\{\mathcal{V}(\sigma)\mid \sigma\in \mathcal{K}\}$ is an abstract simplicial complex associated to $\mathcal{K}$, but it is not unique. In this extended context, a set of points $\{s_{0},\ldots,s_{q}\}\subset\mathbb{R}^{n}$ may not be affinely independent in order to represent a $q$-simplex in an abstract simplicial complex. There are many known abstract simplicial complexes, though two of the most popular are the \v{C}ech and Vietoris-Rips ($VR$) complexes, see~\cite{Ghrist2008, EdelsbrunnerHarer2010}. From a computational sense, $VR$ complex is more feasible (i.e. lower storage and time complexity) than \v{C}ech. 

\begin{definition}[Star, Closure, Closed Star, and Link]\label{def:starlink}
   Let $\mathcal{K}$ be a simplicial complex, and $\sigma \in \mathcal{K}$ be a q-simplex. The $star$ of $\sigma$ in $\mathcal{K}$ is the set of all co-faces of $\sigma$ in $\mathcal{K}$~\cite{EdelsbrunnerHarer2010}:
   \begin{equation}
      St_\mathcal{K}(\sigma) = \{\tau \in \mathcal{K} \mid \sigma \leq \tau\}. 
   \end{equation}
        
   If $K$ is a subset of simplices $K \subset \mathcal{K}$. The $closure$ of $K$ in $\mathcal{K}$ is the smallest simplicial complex containing $K$: 
   \begin{equation}
      Cl_\mathcal{K}(K) = \{\mu \in \mathcal{K} \mid \mu \leq \sigma \text{ for some } \sigma \in K \}. 
   \end{equation}
   The smallest simplicial complex that contains $St_\mathcal{K}(\sigma)$ is the \emph{closed star} (closure of star) of $\sigma$ in $\mathcal{K}$:   
   \begin{equation}
     \overline{St}_\mathcal{K}(\sigma) = Cl_\mathcal{K}(St_\mathcal{K}(\sigma)).  
   \end{equation}
   
    The $link$ of $\sigma$ is the set of simplices in its closed star that do not share any face with $\sigma$~\cite{EdelsbrunnerHarer2010}:
    \begin{equation}
      Lk_\mathcal{K}(\sigma) = \{\tau \in \overline{St}_\mathcal{K}(\sigma) \mid \tau \cap \sigma = \emptyset\}.  
    \end{equation}        
\end{definition}

Since the link operator concept will be important throughout this paper, we present two equivalent characterizations of this set:

\begin{lemma} \label{lemma:002} Let $\mathcal{K}$ be a simplicial complex and $\sigma\in \mathcal{K}$. Then $Lk_\mathcal{K}(\sigma)$ coincides with the sets
\begin{equation}
 A  = \overline{St}_\mathcal{K}(\sigma) \setminus (St_\mathcal{K}(\sigma) \cup Cl_\mathcal{K}(\sigma)), \text{ and}
 \end{equation}
\begin{equation}\label{eq:linklemma}
B=\bigcup_{\mu \in St_{\mathcal{K}}(\sigma)} \{[\mathcal{V}(\mu) \setminus \mathcal{V}(\sigma)]\}
\end{equation}
\end{lemma}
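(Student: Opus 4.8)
The plan is to prove the two equalities $Lk_\mathcal{K}(\sigma)=A$ and $Lk_\mathcal{K}(\sigma)=B$ separately, each by a double inclusion, reducing everything to containments between vertex sets. The facts I would use repeatedly are the following consequences of Definitions~\ref{def:complex} and~\ref{def:starlink}: a simplex is determined by its vertex set and $\mathcal{K}$ is closed under faces, so for $\tau,\mu\in\mathcal{K}$ one has $\tau\le\mu$ iff $\mathcal{V}(\tau)\subseteq\mathcal{V}(\mu)$, and the simplex $[\mathcal{V}']$ lies in $\mathcal{K}$ whenever $\mathcal{V}'\subseteq\mathcal{V}(\mu)$ for some $\mu\in\mathcal{K}$; the condition $\tau\cap\sigma=\emptyset$ translates into $\mathcal{V}(\tau)\cap\mathcal{V}(\sigma)=\emptyset$; and $\tau\in\overline{St}_\mathcal{K}(\sigma)$ means precisely that $\tau\le\mu$ for some $\mu\in St_\mathcal{K}(\sigma)$, i.e. for some $\mu\in\mathcal{K}$ with $\mathcal{V}(\sigma)\subseteq\mathcal{V}(\mu)$.

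I would treat the characterization $B$ first, as it is the cleaner one. For $B\subseteq Lk_\mathcal{K}(\sigma)$: given $\mu\in St_\mathcal{K}(\sigma)$, put $\tau=[\mathcal{V}(\mu)\setminus\mathcal{V}(\sigma)]$; since $\mathcal{V}(\tau)\subseteq\mathcal{V}(\mu)$ we get $\tau\le\mu$, hence $\tau\in\mathcal{K}$ and $\tau\in\overline{St}_\mathcal{K}(\sigma)$, while $\mathcal{V}(\tau)\cap\mathcal{V}(\sigma)=\emptyset$ by construction, so $\tau\in Lk_\mathcal{K}(\sigma)$. For $Lk_\mathcal{K}(\sigma)\subseteq B$: given $\tau\in Lk_\mathcal{K}(\sigma)$, pick $\mu\in St_\mathcal{K}(\sigma)$ with $\tau\le\mu$ and set $\mu'=[\mathcal{V}(\sigma)\cup\mathcal{V}(\tau)]$; then $\mathcal{V}(\mu')\subseteq\mathcal{V}(\mu)$ gives $\mu'\in\mathcal{K}$, and $\mathcal{V}(\sigma)\subseteq\mathcal{V}(\mu')$ gives $\mu'\in St_\mathcal{K}(\sigma)$, and because $\tau$ is disjoint from $\sigma$ we have $\mathcal{V}(\mu')\setminus\mathcal{V}(\sigma)=\mathcal{V}(\tau)$, so $\tau=[\mathcal{V}(\mu')\setminus\mathcal{V}(\sigma)]\in B$. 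The only bookkeeping point is the empty simplex: the term $\mu=\sigma$ contributes $[\emptyset]$, which should be read consistently on both sides (or simply discarded).

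For the characterization $A$ one inclusion is immediate: if $\tau\in\overline{St}_\mathcal{K}(\sigma)$ and $\mathcal{V}(\tau)\cap\mathcal{V}(\sigma)=\emptyset$, then $\tau$ cannot be a coface of $\sigma$ (a coface contains all of the nonempty set $\mathcal{V}(\sigma)$) nor a face of $\sigma$ (a face has all its vertices inside $\mathcal{V}(\sigma)$), so $\tau\in A$, giving $Lk_\mathcal{K}(\sigma)\subseteq A$. The reverse inclusion $A\subseteq Lk_\mathcal{K}(\sigma)$ is the step I expect to be the main obstacle. Writing $\mathcal{V}(\mu)=\mathcal{V}(\sigma)\cup W$ with $W=\mathcal{V}(\mu)\setminus\mathcal{V}(\sigma)$ for a coface $\mu$ of $\sigma$ containing a given $\tau\in\overline{St}_\mathcal{K}(\sigma)$, the hypotheses $\tau\notin St_\mathcal{K}(\sigma)$ and $\tau\notin Cl_\mathcal{K}(\sigma)$ say only that $\mathcal{V}(\tau)\not\supseteq\mathcal{V}(\sigma)$ and $\mathcal{V}(\tau)\cap W\neq\emptyset$; these do not by themselves force $\mathcal{V}(\tau)\cap\mathcal{V}(\sigma)=\emptyset$, since the simplex spanned by one vertex of $\sigma$ together with a vertex of $W$ violates it whenever $\dim\sigma\ge 1$. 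So this step needs extra input: either one restricts the statement to $0$-simplices $\sigma$ — which is the only case used later in the paper, where $\sigma$ is a single data point, $St_\mathcal{K}(\sigma)$ is the star of a vertex, $St_\mathcal{K}(\sigma)\cup Cl_\mathcal{K}(\sigma)=St_\mathcal{K}(\sigma)$, and the identity collapses to the standard $Lk_\mathcal{K}(v)=\overline{St}_\mathcal{K}(v)\setminus St_\mathcal{K}(v)$ — or one replaces $St_\mathcal{K}(\sigma)\cup Cl_\mathcal{K}(\sigma)$ by $\bigcup_{v\in\mathcal{V}(\sigma)}St_\mathcal{K}(v)$, the set of all simplices meeting $\sigma$, after which the case analysis above closes at once. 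Pinning down the precise hypothesis under which the $A$-description is intended is therefore the crux; the $B$-description, by contrast, holds as stated.
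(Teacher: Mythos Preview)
Your analysis is correct, and in fact sharper than the paper's own proof. The paper argues via the cycle of inclusions $Lk_\mathcal{K}(\sigma)\subset A\subset B\subset Lk_\mathcal{K}(\sigma)$, whereas you establish $Lk_\mathcal{K}(\sigma)=B$ directly by double inclusion and then treat $A$ separately. The arguments for $Lk_\mathcal{K}(\sigma)\subset A$ and $B\subset Lk_\mathcal{K}(\sigma)$ are essentially the same in both write-ups; the substantive difference is in the remaining step.

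You are right to flag the inclusion $A\subset Lk_\mathcal{K}(\sigma)$ as problematic for $\dim\sigma\ge 1$. Your counterexample is exactly the obstruction: with $\sigma=[v_0,v_1]$ and $\mu=[v_0,v_1,v_2]\in St_\mathcal{K}(\sigma)$, the edge $\tau=[v_0,v_2]$ lies in $\overline{St}_\mathcal{K}(\sigma)$, is neither a face nor a coface of $\sigma$, yet meets $\sigma$ in $[v_0]$; hence $\tau\in A\setminus Lk_\mathcal{K}(\sigma)$. The paper's proof of $A\subset B$ glosses over precisely this point: it produces a $\mu\in St_\mathcal{K}(\sigma)$ with $\tau\le\mu$ and $\mathcal{V}(\mu)\setminus\mathcal{V}(\tau)\subset\mathcal{V}(\sigma)$, which only yields $\mathcal{V}(\mu)\setminus\mathcal{V}(\sigma)=\mathcal{V}(\tau)\setminus\mathcal{V}(\sigma)$, and then asserts $\tau=[\mathcal{V}(\mu)\setminus\mathcal{V}(\sigma)]$ without justifying $\mathcal{V}(\tau)\cap\mathcal{V}(\sigma)=\emptyset$. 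That assertion fails on the example above.

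Your proposed remedies are the right ones. Restricting to $0$-simplices $\sigma=[v]$ makes $Cl_\mathcal{K}(\sigma)\subset St_\mathcal{K}(\sigma)$, and the identity $A=\overline{St}_\mathcal{K}(v)\setminus St_\mathcal{K}(v)=Lk_\mathcal{K}(v)$ is then standard; this is also the only case the paper ever invokes (Definition~\ref{def:assoc_func_psi} uses the link of a vertex $[v]$). Alternatively, replacing $St_\mathcal{K}(\sigma)\cup Cl_\mathcal{K}(\sigma)$ by $\bigcup_{v\in\mathcal{V}(\sigma)}St_\mathcal{K}(v)$, i.e.\ removing every simplex that meets $\sigma$, gives a characterization valid for arbitrary $\sigma$. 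Your handling of $B$ is complete as written, modulo the cosmetic issue of the empty simplex arising from $\mu=\sigma$, which you already noted.
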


\begin{proof}
Let $\tau$ be a simplex in $Lk_\mathcal{K}(\sigma)$. In particular, $\tau$ does not belong to $St_\mathcal{K}(\sigma)$ nor $ Cl_\mathcal{K}(\sigma) $ since every simplex in one of these two sets necessarily intersects $\sigma$, then $Lk_\mathcal{K}(\sigma)\subset A$. \\
If $\tau$ is a simplex in $A$, then there exists $\mu\in St_\mathcal{K}(\sigma)$ such that $\tau\leq \mu$ and $(\mathcal{V}(\mu)\setminus\mathcal{V}(\tau))\subset\mathcal{V}(\sigma)$. It follows that 
$\tau=[\mathcal{V}(\mu)\setminus\mathcal{V}(\sigma)]$ and $A\subset B$. Finally, if $\tau\in B$, then $\tau=[\mathcal{V}(\mu)\setminus\mathcal{V}(\sigma)]$ for some $\mu\in St_\mathcal{K}(\sigma)$. It follows that $\tau\in \overline{St}_\mathcal{K}(\sigma)$, but $\tau\cap \sigma =\emptyset$. Then, $B\subset Lk_\mathcal{K}(\sigma)$, and the equivalence of sets A and B is stated.  

\end{proof}

Figure~\ref{fig:linkstar} presents an example of the star and link of the $0$-simplex $[s_4]$ in a given simplicial complex $\mathcal{K}$ built on a point set $S=\{s_{2},s_{3},s_{4},s_{5}\}$. 
    
\begin{figure}
 \centering   
  \includegraphics[width=0.8\columnwidth]{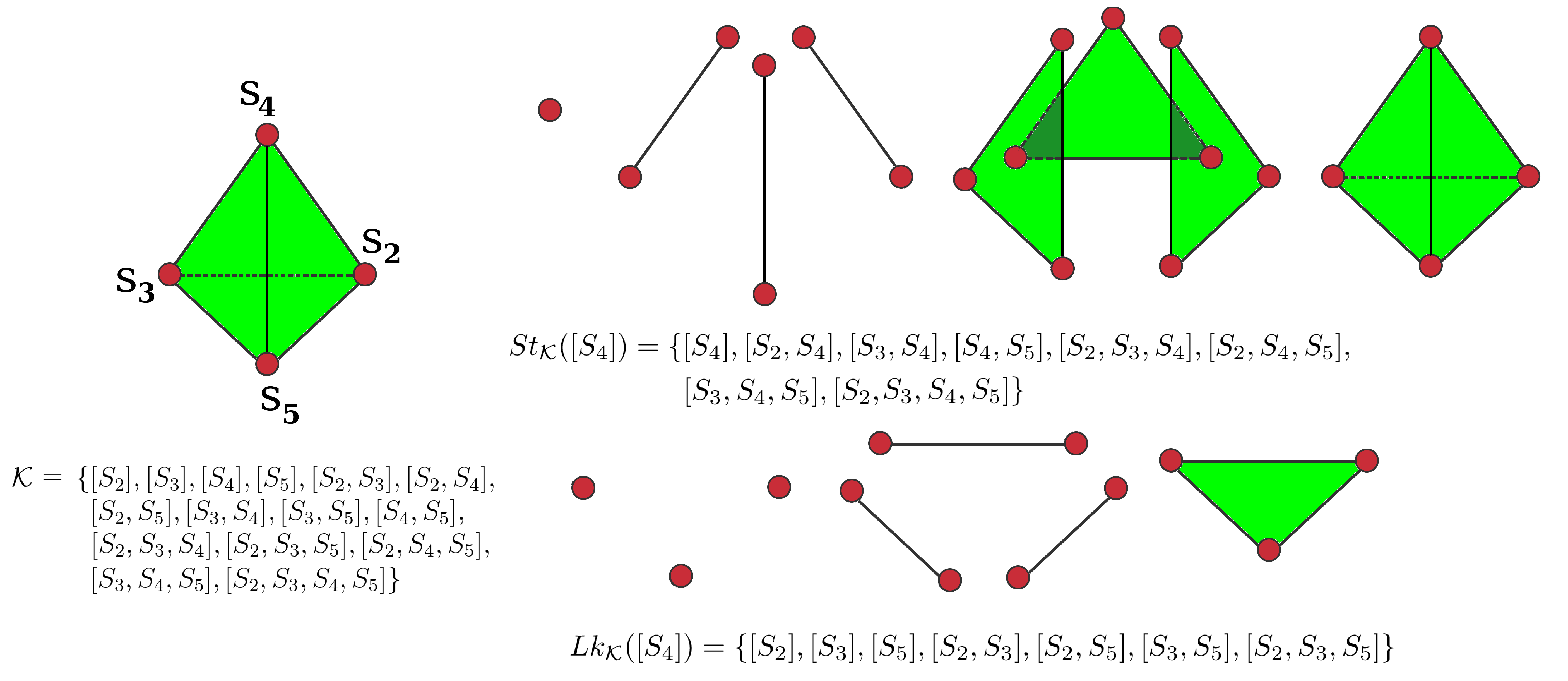}
 \caption{Example of $St_\mathcal{K}([s_4])$ and $Lk_\mathcal{K}([s_4])$ on a given simplicial complex (a tetrahedron and all its faces) $\mathcal{K}$.}
 \label{fig:linkstar}
\end{figure}

\subsection{Persistent Homology}

As a general rule, the objective of PH is to track how topological features on a topological space appear and disappear when a scale value (usually a radius) varies incrementally, in a process known as filtration~\cite{EdelsbrunnerHarer2010}.

\begin{definition}[Filtration]\label{def:filtration} Let $\mathcal{K}$ be a simplicial complex.  A filtration $\mathcal{F}$ on $\mathcal{K}$ is a succession of increasing sub-complexes of $\mathcal{K}$:
$\emptyset \subseteq \mathcal{K}_0 \subseteq \mathcal{K}_1 \subseteq \mathcal{K}_2 \subseteq \mathcal{K}_3 \subseteq \cdots \subseteq \mathcal{K}_n = \mathcal{K}$. In this case, $\mathcal{K}$ is called a filtered simplicial complex.
 \end{definition}
 
 In many simplicial complexes the simplices are determined by proximity under a distance function. A filtration $\mathcal{F}$ on a simplicial complex $\mathcal{K}$ is obtained by taking a collection $\mathcal{E}_\mathcal{K}$ of positive values $0 < \epsilon_{0}< \epsilon_{1}<\dots<\epsilon_{n}$, and the complex $\mathcal{K}_{i}$ corresponds to the value $\epsilon_{i}$. The set $\mathcal{E}_\mathcal{K}$ is called the \textit{filtration value collection} associated to $\mathcal{F}$. Every $\mathcal{K}_i \subseteq \mathcal{K}$ could be recovered with the association function $\psi_\mathcal{F}(\epsilon_i)$.

 A filtration could be understood as a method to build the whole simplicial complex $\mathcal{K}$ from a ``family'' of sub-complexes incrementally sorted according to some criteria, where each level $i$ corresponds to the ``birth'' or ``death'' of a topological feature (connected components, holes, voids). A topological feature of dimension j is a chain of j-simplices, which is not trivial in the $j$-th homology group, also referred to as a non-trivial $j$-cycle. Thus, a persistence interval (birth, death) is the ``lifetime'' of a given topological feature~\cite{EdelsbrunnerHarer2010} (see Figure~\ref{fig:filtracion}).

  \begin{figure}[H] 
    \centering
    \includegraphics[width=0.9\columnwidth]{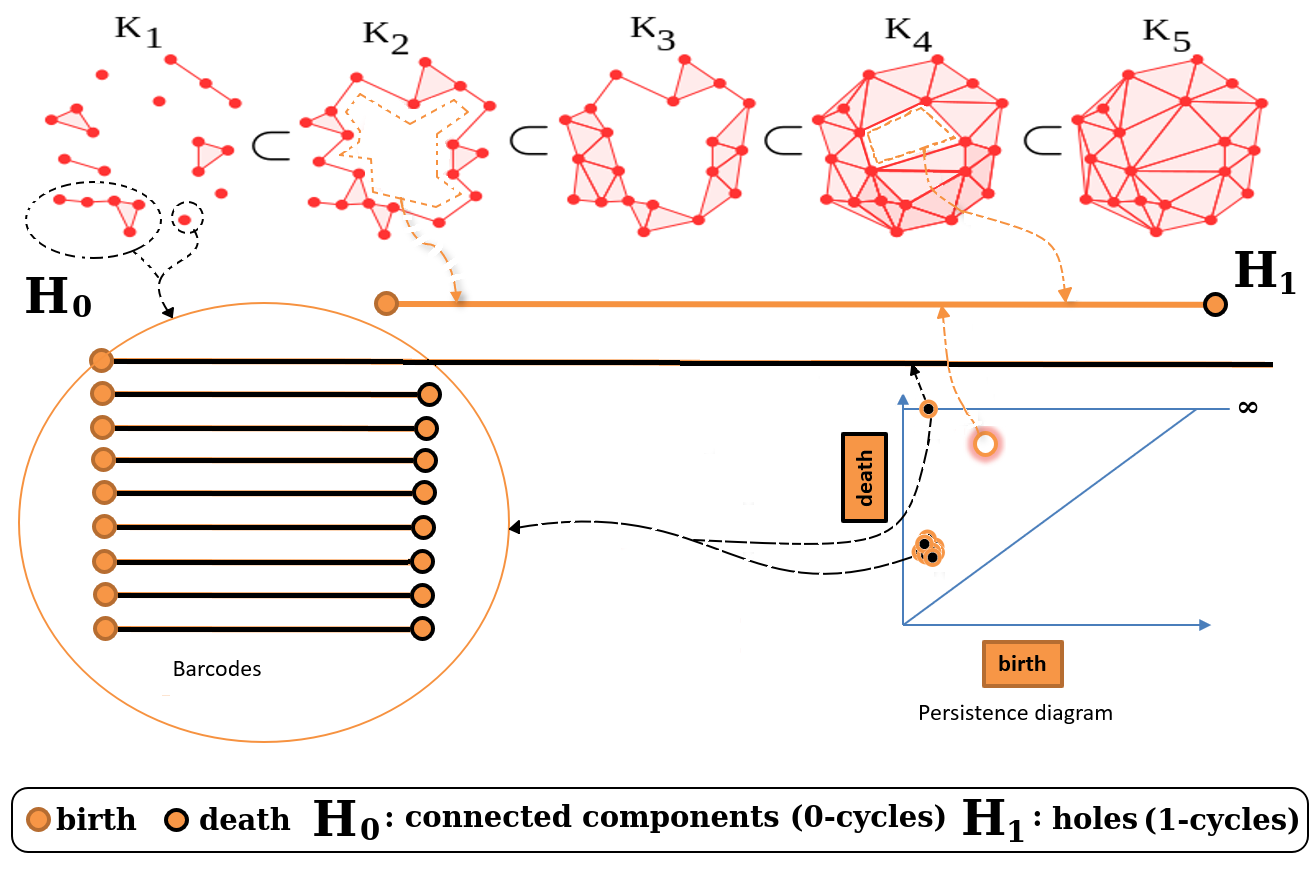}
    \caption{A fragment of a simplicial complex filtration is presented with some selected topological features. Initially, there are nine connected components, which get merged into one. Conforming a filtration value is increased topological features of higher dimensions are born and die at the end. The complete topological information is then summarized in barcodes and persistence diagrams.}\label{fig:filtracion}
\end{figure}

 \begin{definition}[Filtration value of a q-simplex]\label{def:getfvalue}  Let $\mathcal{K}$ be a filtered simplicial complex and $\mathcal{E}_\mathcal{K}$ its filtration value collection. Let $\sigma\in \mathcal{K}$ be a q-simplex. 
 If $\sigma \in \mathcal{K}_j$ but $\sigma \not \in \mathcal{K}_{j-1}$, then $ \xi_\mathcal{K}(\sigma)=\varepsilon_{j}$ is the filtration value of $\sigma$. \par
 \end{definition}
 Note that $\tau \leq \sigma \implies \xi_\mathcal{K}(\tau) \leq \xi_\mathcal{K}(\sigma)$, which means that in a filtered simplicial complex $\mathcal{K}$, every simplex $\tau \in \mathcal{K}$ appears before all its co-faces.
 
 \begin{figure}[H] 
    \centering
    \includegraphics[width=0.5\columnwidth]{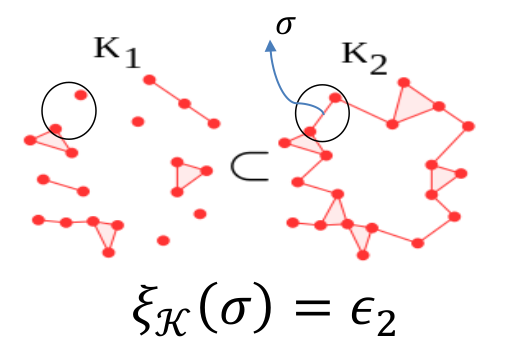}
    \caption{Filtration value of a simplex $\sigma$, which appears in $\mathcal{K}_2$ sub-complex under the $\epsilon_2$ filtration value.}\label{fig:fvalue}
\end{figure}
 
 \subsection{Classification problem}\label{scc:preliminar}

Let $\mathbb{R}^{n}$ be a feature space and $P \subset \mathbb{R}^{n}$ a finite subspace. Suppose $P$ is divided in two subspaces $P = S \cup X$, where $S$ is the training set and $X$ is the test set. Let $L=\{l_{1},\dots,l_{N}\}$ be the label set, and $\mathbb{T} = \{(p, l): p \in P, l \in L\}$ be the association space, where $\mathbb{T} = T_S \cup T_X$, $T_S$ and $T_X$ the two disjoint association sets corresponding to $S$ and $X$, respectively. The label list $Y = \{l_i \mid (x_i, l_i) \in T_X\}$, is the list of labels assigned to each element of $X$ in the association set $T_X$. 
Thus, the classification problem could be defined as how to predict a suitable label $l \in L$ for every $x \in X$ by assuming the association set $T_X$ is unknown. Consequently, the predicted label list, $\hat{Y} \subset L^{\abs{T_X}}$, will be the collection of labels resulting from the classification method. 
%Since $\abs{Y} = \abs{\hat{Y}}$, it is common to use $Y$ to evaluate the quality of $\hat{Y}$. 

%~*~~*~~*~~*~~*~~*~~*~~*~~*~~*~~*~~*~~*~~*~~*~~*~~*~~*~~*~~*~~*~~*~~*~~*~~*~
%%%~*~~*~~*~~*~~*~~*~~*~~*~~* Proposed Classification Method *~~*~~*~~*~~*~~*~~*~~*~~*~~*~~
%~*~~*~~*~~*~~*~~*~~*~~*~~*~~*~~*~~*~~*~~*~~*~~*~~*~~*~~*~~*~~*~~*~~*~~*~~*~
\section{Proposed Classification Method}\label{scc:classifier}

A classification method based on TDA is presented in this section. Overall, a filtered simplicial complex $\mathcal{K}$ is built over $P$ to generate data relationships.
~The proposed method is based on the assumption that on the filtration, a sub-complex $\mathcal{K}_i\subset \mathcal{K}$ exists, whose simplices represent a feasible approximation to the data topology.
~The fact that a point set $\{v_0, v_1,\dots, v_q\} \subset P$ defines a q-simplex $\sigma \in \mathcal{K}$ implies a similarity or dissimilarity relationship between the points $v_0, v_1,\ldots, v_q$. This implicit relationship among data is applied by the proposed method to propagate labels from labeled points to unlabeled points. In Figure~\ref{fig:overall}, the proposed method is illustrated by applying a 4-step process to classify two unlabeled points $x_1,x_2 \in X$. \par  

\begin{figure}[H]
    \centering
    \includegraphics[width=1\columnwidth]{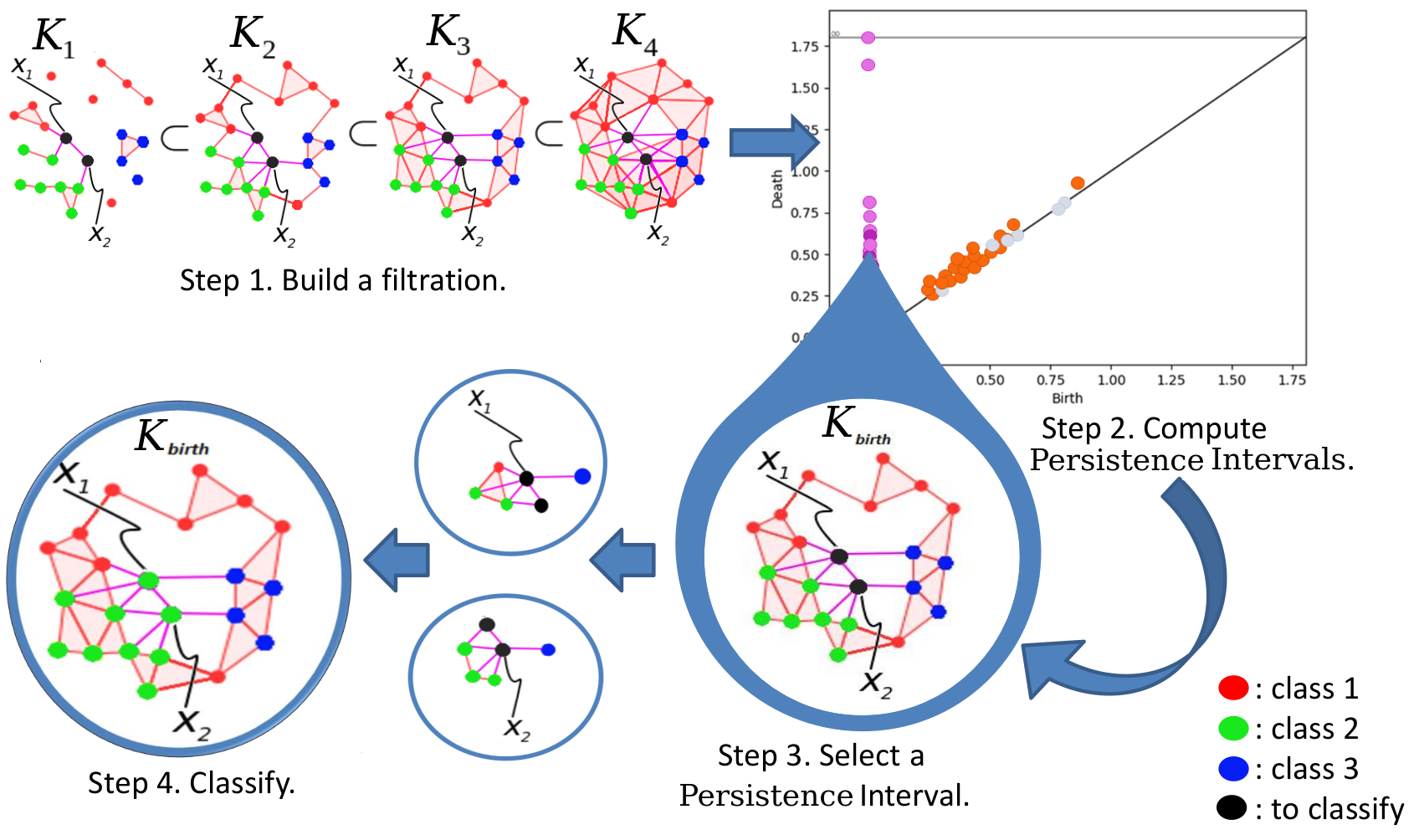}
    \caption{Overall TDABC algorithm.} \label{fig:overall}
\end{figure}

\subsection{Step 1. Building the filtered simplicial complex.}\label{scc:construction}

A filtered simplicial complex $\mathcal{K}$ is built on the dataset $P = S \cup X$ using a distance or proximity function which is problem-specific (Euclidean, Manhattan, Cosine, and more). A maximal dimension $2 \leq q \ll \abs{P}$ is given to control the simplicial complex exponential growing. 
We apply the edge collapsing method~\cite{boissonnat2020} to reduce the number of simplices but maintaining the same persistence information of the original simplicial complex. The implementation details are available as supplementary material.

\subsection{Step 2 \& 3. Recover a meaningful sub-complex}\label{scc:step2}

A filtered simplicial complex $\mathcal{K}$ provides a large quantity of multiscale data relationships. Thereby, we would like to choose a sub-complex $\mathcal{K}_i$ from the filtered simplicial complex $\mathcal{K}$, to approximate the actual structure of the dataset. 
In this vein, we exploit the ability of PH to detect topological features. For a filtered simplicial complex $\mathcal{K}$ of dimension $q$, PH will compute up to $q$-dimensional homology groups.
Each topological feature represented by an element in a  homology group of a given dimension will be represented by a persistence interval $(birth, death)\subset\mathbb{R}$. Let $D^{j}$ be the set of the persistent intervals of non-trivial $j$-cycles along the filtration of $\mathcal{K}$. We then collect all persistence intervals $D = \bigcup_{j>0} D^j$. The 0-dimensional homology group is excluded because we aim to minimize the connected components while looking at homology groups in higher dimensions.\par %The  we include h achieve those aims simultaneously to getting meaningful topological information pieces.\par
We define the $int(\cdot)$ function (Equation~\ref{eq:intd}) which measures the lifetime of the topological feature associated with each $d \in D$. The immortal persistence intervals (infinite death values) are truncated to the maximum value in the collection of filtration values $max(\mathcal{E}_\mathcal{K})$.

\begin{equation}\label{eq:intd}
    int(d):=min\{d[death],max(\mathcal{E}_\mathcal{K})\}-d[birth], 
\end{equation}

A persistence interval $d \in D$ is selected by using the functions defined on Equations~\ref{eq:maxint},\ref{eq:randint},\ref{eq:avgint}:  

\begin{enumerate}[(a)]
    \item The persistence interval with maximal persistence: 
    \begin{equation}\label{eq:maxint}
      d_m = MaxInt(D) = \argmax_{d \in D}(int(d)).
    \end{equation}    
    \item A persistence interval selected randomly on the upper half persistence intervals of highest-lifespan: 
    \begin{equation}\label{eq:randint}
      d_r = RandInt(D) = random(\{d \mid int(d) > avg(D)\}).
    \end{equation}
    \item The closest interval to the persistence intervals average: 
    \begin{equation}\label{eq:avgint}
      d_a = AvgInt(D) = \argmin_{d \in D} (\abs{int(d) - avg(D) }),
    \end{equation}
    where $avg(D) = \frac{1}{\abs{D}} \cdot \sum_{d_i \in D} int(d_i).$
\end{enumerate}
 If a tie occurs during the computation of $MaxInt(D)$ or $AvgInt(D)$, the persistence interval with higher birth time should be taken. Once $d \in \{d_m, d_r, d_a\}$ is chosen, a sub-complex $\mathcal{K}_i$ must be selected. We choose an appropriate filtration value from the birth, death or middle-time of persistence interval d: 
$$\varepsilon_i \in \left\{d[birth], d[death], \frac{d[birth]+d[death]}{2}\right\},$$ 
then the respective sub-complex $\mathcal{K}_i = \psi_\mathcal{F}(\varepsilon_i)$ is obtained.\par 
 
We collect all simplices born on the lifespan of the selected persistence interval and compute their closure to get the minimal simplicial complex that contains them.\par
In Figure~\ref{fig:circlepinfo}, 
$\mathcal{K}_i \subseteq \mathcal{K}$ is chosen by using PH to guide the selection of candidate persistence intervals. Then, a sub-complex is recovered on the death of the persistence interval selected according to the $MaxInt(\cdot)$,  $RandInt(\cdot)$ selection functions (see Equation~\ref{eq:maxint} and Equation~\ref{eq:randint}). In this example, the filtered simplicial complex $\mathcal{K}$ was built on the Circles dataset (noise = $10$), see details in Section~\ref{scc:results}. 

\begin{figure}[t]
\begin{center}
\includegraphics[width=0.8\columnwidth]{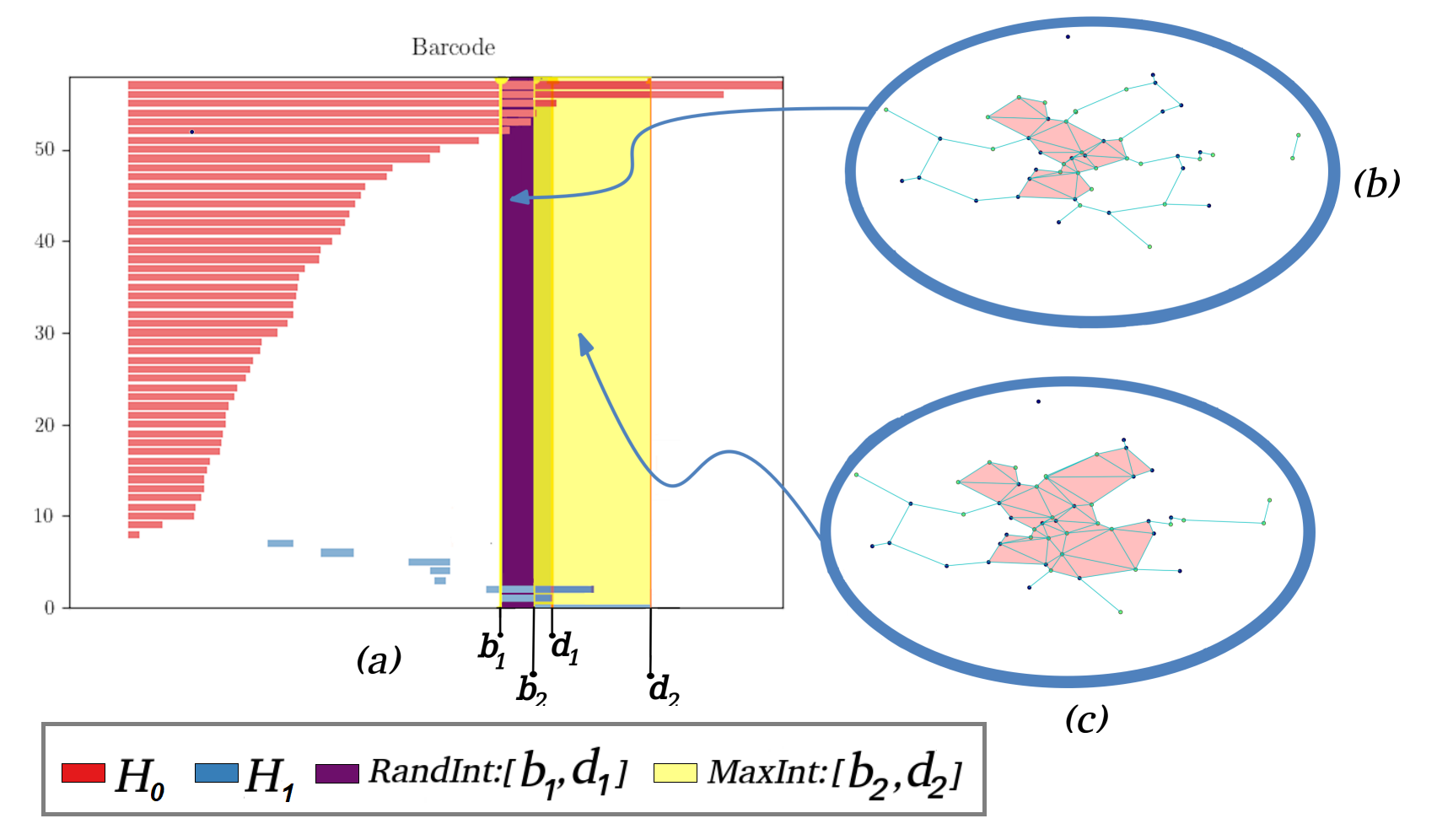}
\caption{A barcode representation of PH in $\mathcal{K}$ is shown in (a), which have two homology groups $H_0$ (group of connected components) and $H_1$ (group of 1-cycles). We show two persistence intervals $[b_1, d_1]$ (purple), and $[b_2, d_2]$ (yellow) corresponding to the $RandInt(\cdot)$ and $MaxInt(\cdot)$ selection functions, respectively. In (b) and (c), respectively, the sub-complexes $\mathcal{K}_{d_1} \subseteq \mathcal{K}$ and $\mathcal{K}_{d_2} \subseteq \mathcal{K}$ are shown.}\label{fig:circlepinfo}    
    
\end{center}

\end{figure}

\subsection{Step 4. Classify} \label{scc:labeling}

\begin{figure*}[htb]
    \centering
    \includegraphics[width=1\textwidth]{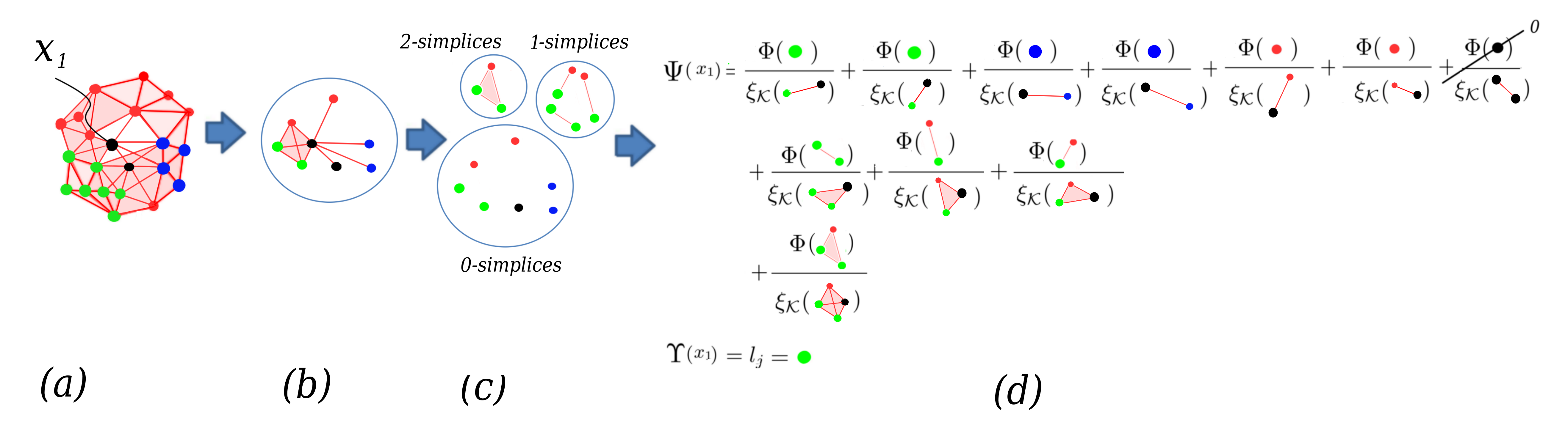}
    \caption{From a selected sub-complex $\mathcal{K}_i \subseteq \mathcal{K}$ in \textbf{(a)}, the star of $x_1$ is obtained in \textbf{(b)}. In  \textbf{(c)}, $Lk([x_1])$ is shown divided into 0-simplices, 1-simplices, and 2-simplices. In \textbf{(d)}, the extension function $\Psi_{i}(x_1)$ is executed, and finally the labeling function $\Upsilon_{i}(x_1)$ assigns a green label to $x_1$. The $\xi_\mathcal{K}$ function acts as a tie-breaker of green/red/blue label contributions, even when, in section \text{(b)}, might seem a tie between labels.}\label{fig:epsilon}
\end{figure*}

The neighborhood relationships of a q-simplex $\sigma \in \mathcal{K}$ will be recovered by using the link, star, and closed star (Definition~\ref{def:starlink}). 
A key component of the proposed method is the label propagation over a filtered simplicial complex detailed in this section.

Suppose a preferred sub-complex $\mathcal{K}_{i} \in \mathcal{K}$ has been selected according to section~\ref{scc:step2}. Let $A$ be the $\mathbb{R}$-module with generators $\hat{l}_1, \hat{l}_2,\dots, \hat{l}_N$ with $N = \abs{L}$. We consider $0 \in A$ to represent no-value. The generator $\hat{l}_j$ will be associated to the label $l_{j}$ according to Definition~\ref{def:assoc_funct}.

\begin{definition}[Association function]\label{def:assoc_funct} Let  $\Phi_{i}: \mathcal{K}_{i} \rightarrow A$ be the association function defined on a 0-simplex $v \in \mathcal{K}_{i}$ as $\Phi_{i}(v) =\hat{l}_{j}$ if $(v,l_{j})\in T_S$ and $\Phi_{i}(v)=0$ in any other case. The association function can be extended to a q-simplex $\sigma$ as $\Phi_{i}(\sigma)= \sum_{v\in \mathcal{V}(\sigma)}\Phi_{i}(v)$.
\end{definition}

In a simplicial complex, we can use the link operation to propagate labels from labeled points in $S$ to unlabeled points in $X$. To do so, we define the extension function as follows:

\begin{definition}[Extension function]\label{def:assoc_func_psi} Let $\Psi_{i}: X \rightarrow A$ be the function defined on a point $v\in X$ by
\begin{equation}\label{eq:extension}
 \Psi_{i}(v)=\sum_{\sigma\in Lk_{\mathcal{K}_{i}}([v])}\frac{\Phi_{i}(\sigma)}{\xi_\mathcal{K}([\mathcal{V}(\sigma) \cup \{v\}]}
\end{equation}

According to Lemma~\ref{lemma:002}, we can also obtain an equivalent formula

    \begin{equation} \label{eq:extension_simplex}
      \Psi_{i}(v)=\sum_{\mu\in St_{\mathcal{K}_{i}}([v])}\frac{\Phi_{i}([\mathcal{V}(\mu)\setminus \{v\}])}{\xi_\mathcal{K}(\mu)}
    \end{equation}
\end{definition}    
     
%  In Equation~\ref{eq:extension} and Equation~\ref{eq:extension_simplex}, for every q-simplex $\sigma \in Lk_{\mathcal{K}_{i}}([v])$, we obtain $\mu = [\mathcal{V}(\sigma) \cup \{v\}] \in St_{\mathcal{K}_i}([v])$.
 In Equation~\ref{eq:extension} and Equation~\ref{eq:extension_simplex}, we obtain the co-faces of $v$ such that $\mu \in St_{\mathcal{K}_i}([v]),~\mu = [\mathcal{V}(\sigma) \cup \{v\}],~\sigma \in Lk_{\mathcal{K}_i}([v])$ according to Lemma~\ref{lemma:002}. The filtration value $\xi_\mathcal{K}(\mu)$ is applied to prioritize the influence of $\sigma$ to label $v$. Let $\alpha, \beta \in St_{\mathcal{K}_{i}}([v])$ be two simplices, such that $\xi_\mathcal{K}(\alpha) < \xi_\mathcal{K}(\beta)$. This condition implies that $\alpha$ was clustered around $v$ earlier than $\beta$ was since $\alpha$ appears before $\beta$ in the filtration. In consequence, the $\alpha$ contributions should be more important than the $\beta$ contributions. Observe that using filtration values as inverse weights makes the classification aware of the filtration history and provides several properties such as distance encoding operators, indirect local outlier factors (because it excludes too far points), and density estimators (provides short values for dense regions or small q-simplices). Figure~\ref{fig:epsilon} section (d), shows the impact of $\xi_\mathcal{K}(\cdot)^{-1}$ to classify an unlabeled point $x_1\in X$.\par

According to the previous definitions, given a point $v\in X$, the evaluation of the extension function at $v$ would be $\Psi_{i} (v)=\sum_{j=1}^{N} a_{j}\cdot \hat{l}_{j}$, where $a_{j}\in \mathbb{R}^{+}\cup\{0\}, j=1,\dots, N$. \par

\begin{definition}[Labeling function]\label{def:lab_func}  Let $v$ be a point in $X$ such that $\Psi_{i} (v)=\sum_{j=1}^{N} a_{j}\cdot \hat{l}_{j}$. Let $\tilde{a}$ be the maximum value in $\{a_{j}\}_{j=1}^{N}$, and $\tilde{A} = \{j\mid a_{j}=\tilde{a}\}$ be the set of maximum value indexes. We define the labeling function $\Upsilon_i$ at $v$ as  $\Upsilon_i(v) =  l_k$ where $k$ is uniformly selected at random from $\tilde{A}$. If $\tilde{a} = 0$ then $\Upsilon_i(v) = \emptyset$.
\end{definition} 

If there is a unique maximum in the set $\{a_j\}_{j=1}^{N}$ from the previous definition, the labeling function is uniquely defined at $v$. In all tested datasets, the label assignment of each point in $X$ was uniquely defined because the factor $\frac{1}{\xi_\mathcal{K}(\cdot)}$ acts as a tie-breaker. Figure~\ref{fig:epsilon} shows the labeling process on a previously selected sub-complex, and the classification process is summarized in Algorithm~\ref{alg:labeling}. 

%ALGORITMO DE ETIQUETADO
\begin{algorithm}[H]
\caption{\textbf{Labeling:} Labeling a point set $X$.}\label{alg:labeling}
\begin{algorithmic}[1]
\REQUIRE A filtered simplicial complex $\mathcal{K}$. A non-empty point set $X$.
\ENSURE A predicted labels list $\hat{Y}$ of $X$.

\STATE $D \leftarrow GetPersistenceIntervalSet(\mathcal{K})$ where:\newline
$D = \{d_i \mid d_i=(birth, death)\}$ 
\STATE Get a desired persistence interval $d$ where: \newline $d \in \{MaxInt(D), RandInt(D), AvgInt(D)\}$\label{step: selection} 
\STATE $\varepsilon_i \leftarrow d[birth];~\mathcal{K}_{i} \leftarrow \psi_\mathcal{F}(\varepsilon_i);~\hat{Y} \leftarrow \{\}$
\WHILE{$X \neq \emptyset$}
\STATE $v \in X;~l \leftarrow \Upsilon_{{i}}(v)$ 
\IF{$l = \emptyset$} 
\STATE $l \leftarrow handling\_special\_cases(v)$ 
\ENDIF
\STATE $\hat{Y} \leftarrow \hat{Y} \cup \{l\};~X \leftarrow X \setminus \{v\}$
\ENDWHILE
\RETURN $\hat{Y}$
\end{algorithmic}
\end{algorithm}

\subsection{Dealing with special cases}\label{scc:border}

There are two cases where $\tilde{a} = 0$ in Definition~\ref{def:lab_func}. When a point $v$ is isolated (i.e., $v \notin P$, thus $Lk_{\mathcal{K}_i}([v]) = \emptyset$), or when all points in $Lk_{\mathcal{K}_i}([v])$ are unlabeled.
%, i.e. $Lk_{\mathcal{K}_i}([v]) \neq \emptyset$ but $\mathcal{V}(Lk_{\mathcal{K}_i}([v])) \subseteq X$. 

\subsubsection{Isolated points}  
To handle isolated points, we look for a collection of points close to $v$. This collection is defined by  $U_v = \{u \in \mathcal{V}(\mathcal{K}_0) \mid f(u, v) \leq 2\cdot \varepsilon_i\}$, where $f(\cdot)$ is the distance or dissimilarity function applied to build $\mathcal{K}$. Where $\varepsilon_i = d[death]$ is the death time of the chosen persistence interval $d$ to recover $\mathcal{K}_i$~(Section~\ref{scc:step2}). Since $U_v$ could be a non-disjoint collection of 0-simplices on $\mathcal{K}_i$, we compute their label contributions regarding $v$, with $\Psi (v) = \sum_{u \in U_v} \frac{\Psi_i (u)}{f(u, v)}$. We use  $\frac{1}{f(\cdot, \cdot)}$ to give more importance to label contributions according to the closeness to $v$. Then, $\Upsilon_i(v)$ is performed according to Definition~\ref{def:lab_func}. Note that when $U_v \subseteq X$, we reach the second case.\par

\subsubsection{Unlabeled link}

To address the case when all points in $Lk_{\mathcal{K}_i}([v])$ are unlabeled, we look for the shortest paths from $v$ to the labeled points. This problem can be considered ``semi-supervised learning" by constraining the domain to a neighborhood around the link. We propose a heuristic to reach a fast convergence by using a priority queue $Q$. We insert all $\sigma \in St_{\mathcal{K}_i}([v])$ in $Q$ by using their filtration values as priority $\rho(\sigma) = \xi_\mathcal{K}(\sigma)$. We also maintain a flag to avoid processing simplices more than once. While $Q$ is not empty, we process $\tau \in Q$ with priority $\rho(\tau)$ by quantifying its label contributions with $\sum_{\tau \in Q}\sum_{\mu \in St_{\mathcal{K}_i}(\tau)}\frac{\Phi_i(\mu)}{\rho(\tau) + \xi_\mathcal{K}(\mu)} = \{a_j\}^N_{j=1}$. Every non-visited $\mu$ such that $\mathcal{V}(\mu) \subseteq X$, is added to $Q$ with priority $\rho(\mu) = \rho(\tau) + \xi_\mathcal{K}(\mu)$. At the end, we report the label with majority of votes according to Definition~\ref{def:lab_func}. 
  
%%%~*~~*~~*~~*~~*~~*~~*~~*~~outlier factors, and density estimators.*~~*~* RESULTS *~*~~*~~*~~*~~*~~*~~*~~*~~*~~*~~*~~
%~*~~*~~*~~*~~*~~*~~*~~*~~*~~*~~*~~*~~*~~*~~*~~*~~*~~*~~*~~*~~*~~*~~*~~*~~*~)
\section{Experimental Results}\label{scc:results}

 Our TDA based classifier (TDABC) was evaluated considering the selection functions: $RandInt(\cdot)$ or TDABC-R, $MaxInt(\cdot)$ or TDABC-M, and $AvgInt(\cdot)$ or TDABC-A. Four baseline methods were selected to compare the proposed methods: k-Nearest Neighbors (KNN), distance-based weighted k-NN (WKNN), Linear Support Vector Machine (LSVM), and Random Forest (RF). All baseline classifiers were manually configured to deal with imbalanced datasets using the known class frequencies (``class\_weight'' parameter in Scikit-Learn Library~\cite{scikitlearn}). Table~\ref{tb:datasets} shows the datasets and their characteristics.

%%%%%--------------
\begin{table}[htb]
\centering
\caption{Selected datasets to evaluate proposed and baseline classifiers.}
\label{tb:datasets}
\resizebox{\columnwidth}{!}{%
\begin{tabular}{|l|l|l|l|l|l|l|l|}
\hline
\textbf{Name} & \textbf{Dimensions} & \textbf{Classes} & \textbf{Size} & \textbf{\begin{tabular}[c]{@{}l@{}}Samples\\ per class\end{tabular}} & \textbf{Noise} & \textbf{Mean} & \textbf{Stdev} \\ \hline
\textbf{Circles} & 2 & 2 & 50 & {[}25,25{]} & 3 & - & - \\ \hline
\textbf{Moon} & 2 & 2 & 200 & {[}100,100{]} & 10 & - & - \\ \hline
\textbf{Swissroll} & 3 & 6 & 300 & {[}50,50,50,50,50,50,{]} & 10 & - & - \\ \hline
\textbf{Iris} & 4 & 3 & 150 & {[}50,50,50{]} & - & - & - \\ \hline\hline
\textbf{Normal} & 350 & 5 & 300 & {[}60,10,50,100,80{]} & - & {[}0,0.3,0.18,0.67,0{]} & 0.3 \\ \hline
\textbf{Sphere} & 3 & 5 & 653 & {[}500,100,25,16,12{]} & - & 0.3 & 0.147 \\ \hline
\textbf{Wine} & 13 & 3 & 178 & {[}59, 71, 48{]} & - & - & - \\ \hline
\textbf{Cancer} & 30 & 2 & 569 & {[}212, 357{]} & - & - & - \\ \hline
\end{tabular}%
}
\end{table}

\subsection{Artificial datasets}\label{scc:artificial}

The \textbf{\textit{Circles, Swissroll, Moon, Norm,}} and \textbf{\textit{Sphere}} datasets were artificially generated. In the case of \textbf{\textit{Circles, Moon,}} and \textbf{\textit{Swissroll}} a Gaussian noise factor was added to diffuse per-class boundaries and to assess classification performance for overlapped data regions.  \par

The \textbf{\textit{Normal}}, and \textbf{\textit{Sphere}} datasets were generated based on a Normal distribution per dimension. The Normal dataset has a high dimension ($P \subset \mathbb{R}^{350}$, $\abs{P} < 350$). The Sphere dataset is always in three dimensions ($P \subset \mathbb{R}^3$), aiming to capture entanglement and imbalance sample distributions.

\subsection{Real-world datasets}
%%%---------- REAL DATA IRIS
The Iris, Wine, and Cancer datasets were selected as real datasets to compare the proposed classifiers and the baseline ones. The \textbf{\textit{Iris dataset}}~\cite{Duadua:2019} is a balanced dataset where one class is linearly separable and the other two are slightly entangled each other. The \textbf{\textit{Wine dataset}}~\cite{Duadua:2019} is an imbalanced dataset with thirteen different measurements to classify three types of wine. 
 The \textbf{\textit{Breast Cancer dataset}} (Cancer)~\cite{Duadua:2019} is an imbalanced dataset with thirty features and two classes. The Wine and Cancer datasets were transformed using a logarithmic statistical transformation. Accordingly, a resulting dataset was obtained $P' = \{\ln{(p + M)}\}_{p \in P}$, with $M$ the minimum component value of the dataset employed to deal with negative numbers. On the other datasets, no transformation was required. 
 
\subsection{Evaluation methodology}

 We build baseline and proposed classifiers using the Euclidean distance in all datasets. \par 

The classifier evaluation across datasets was conducted using a Repeated R-Fold Cross-Validation process ($10$\% fold, N=5). We divide results according to the balancing condition of the datasets, imbalanced dataset results in Table~\ref{tb:tabla_imbalanced} and balanced dataset results are shown in Table~\ref{tb:table_balanced}. We compute the following metrics: $\text{F1}=2 \cdot \frac{\text{Precision} \cdot \text{Recall}}{\text{Precision} + \text{Recall}}$, the harmonic mean between precision and recall, we are considering both with the same importance. The ROC-AUC curve with one-vs-rest approach and macro average. The Precision-Recall (PR) curve and report the Average Precision metric or (PR-AUC). The True Negative Ratio or Specificity ($TNR = \frac{TN}{TN+FP}$), False Positive Ratio ($FPR = \frac{FP}{TP+FP}$), and the Geometric Mean ($GMEAN = \sqrt{TNR \cdot Recall}$) between precision and recall. In each metric a high value is better than a lower one, except in FPR where a lower value is preferred. Figure~\ref{fig:all_metric} shows the results, on the imbalance datasets we report the metric results on the minority class. 

 \begin{figure}[H]
\centering
\includegraphics[width=0.8\columnwidth]{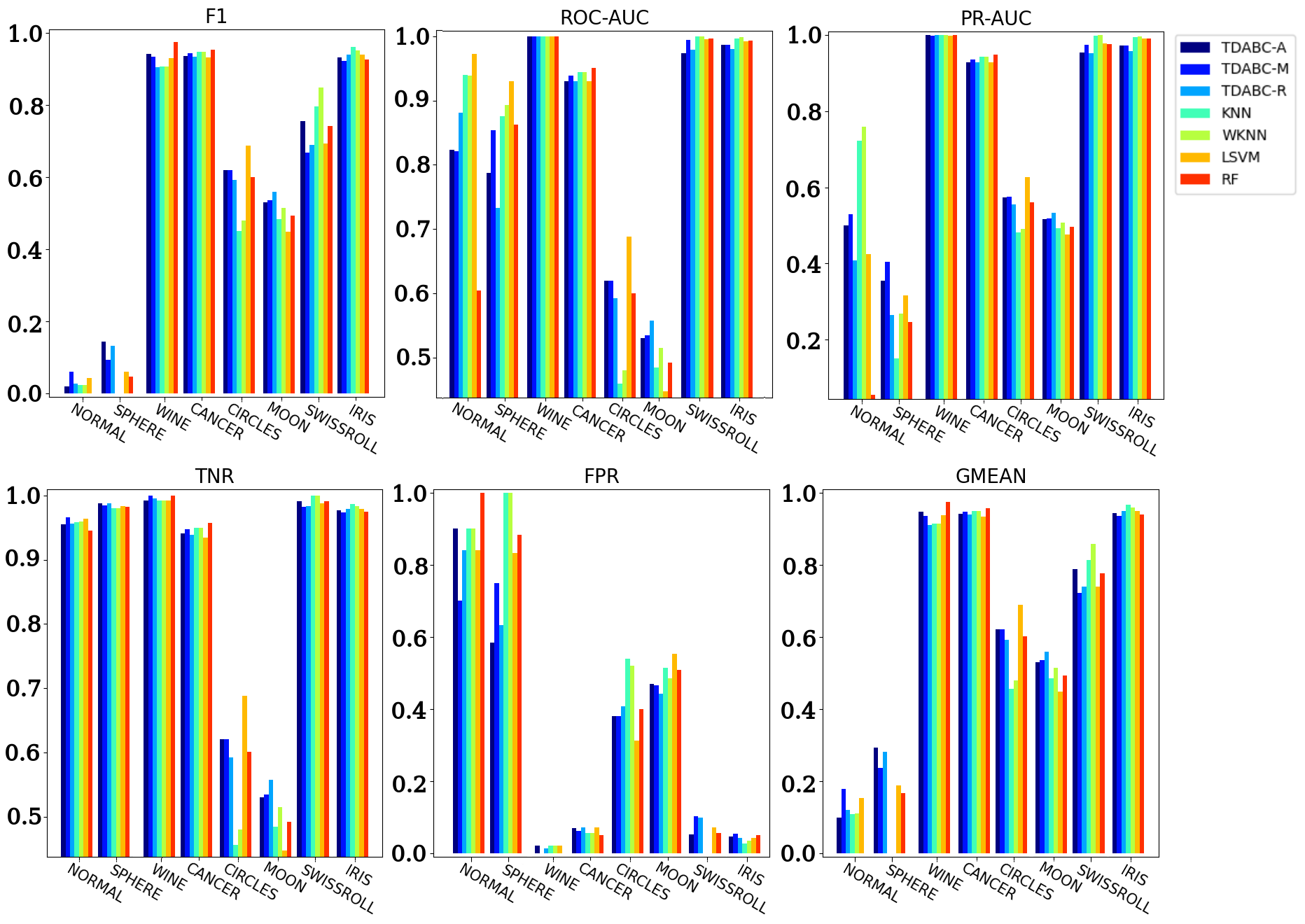}
\caption{We present results of the F1, ROC-AUC, PR-ROC, TNR, FPR, and GMEAN metrics across datasets, in case of imbalanced datasets the results of minority classes were shown, since it is commonly considered the more important class.}\label{fig:all_metric}
\end{figure}

%  We present in Table~\ref{tb:tabla_imbalanced}, the global and minority class results of the F1, PR-AUC, and ROC-AUC metrics on imbalanced datasets, whereas Table~\ref{tb:table_balanced} presents global results of F1, and ROC-AUC metrics. 

\begin{table}[!htb]
\centering
\caption{Metric results per classifier across imbalanced datasets. We show global metric results and the results on the minority (Min) class. In black those classifiers that were superior to the arithmetic mean.}
\label{tb:tabla_imbalanced}
\resizebox{0.8\columnwidth}{!}{%
\begin{tabular}{l|ll|ll|ll|ll}\hline
\multicolumn{1}{c}{\multirow{2}{*}{\textbf{CLASSIFIERS}}} & \multicolumn{2}{|l}{\textbf{NORMAL}} & \multicolumn{2}{|l}{\textbf{SPHERE}} & \multicolumn{2}{|l}{\textbf{WINE}} & \multicolumn{2}{|l}{\textbf{CANCER}} \\
\multicolumn{1}{c}{} & \multicolumn{1}{|c}{\textbf{Global}} & \multicolumn{1}{c}{\textbf{\begin{tabular}[c]{@{}c@{}}Min \\ class\end{tabular}}} & \multicolumn{1}{|c}{\textbf{Global}} & \multicolumn{1}{c}{\textbf{\begin{tabular}[c]{@{}c@{}}Min \\ class\end{tabular}}} & \multicolumn{1}{|c}{\textbf{Global}} & \multicolumn{1}{c}{\textbf{\begin{tabular}[c]{@{}c@{}}Min \\ class\end{tabular}}} & \multicolumn{1}{|c}{\textbf{Global}} & \multicolumn{1}{c}{\textbf{\begin{tabular}[c]{@{}c@{}}Min \\ class\end{tabular}}} \\ \hline
\multicolumn{9}{c}{\textbf{F1}} \\ \hline
% CLASSIFIERS     &  NORMAL                         & SPHERE                          & WINE                            & CANCER 
\textbf{TDABC-A}  & 0.351          & 0.019          & \textbf{0.436} & \textbf{0.143*} & \textbf{0.950} & \textbf{0.941} & 0.935           & 0.917 \\
\textbf{TDABC-M}  & \textbf{0.397} & \textbf{0.059*} & \textbf{0.412} & \textbf{0.092} & \textbf{0.942} & \textbf{0.933} & \textbf{0.943}  & \textbf{0.928} \\
\textbf{TDABC-R}  & 0.336          & \textbf{0.027} & \textbf{0.441} & \textbf{0.132} & 0.917           & 0.905         & 0.934           & 0.916 \\
\textbf{KNN}      & \textbf{0.411} & 0.022          & 0.342          & 0.000          & 0.918           & 0.906         & \textbf{0.947}  & \textbf{0.933} \\
\textbf{WKNN}     & \textbf{0.412} & 0.022          & 0.397          & 0.000          & 0.918           & 0.906          & \textbf{0.947} & \textbf{0.933} \\
\textbf{LSVM}     & \textbf{0.425} & \textbf{0.042} & 0.389          & 0.059          & \textbf{0.940}  & \textbf{0.930} & 0.932          & 0.914 \\
%\textbf{RBF-SVM} & 0.100          & 0.000          & \textbf{0.560} & 0.025          & 0.189           & 0.000          & 0.386          & 0.000 \\
\textbf{RF}       & 0.250          & 0.000          & \textbf{0.427} & 0.046 & \textbf{0.978}  & \textbf{0.974*} & \textbf{0.954} & \textbf{0.941*} \\ \hline
\textbf{Average}  & 0.369          & 0.027          & 0.407          & 0.067          & 0.938           & 0.928          & 0.942          & 0.926 \\ \hline

\multicolumn{9}{c}{\textbf{PR-AUC}} \\ \hline
% CLASSIFIERS     &  NORMAL                         & SPHERE                                 & WINE                            & CANCER 
\textbf{TDABC-A} & 0.765          & \textbf{0.500}  & \textbf{0.904}    & \textbf{0.355}    & 0.978         & \textbf{1.000*}  & 0.928          & 0.928 \\
\textbf{TDABC-M} & 0.739          & \textbf{0.529}  & 0.961             & \textbf{0.405*}    & 0.977         & 0.999           & \textbf{0.937} & \textbf{0.937} \\
\textbf{TDABC-R} & 0.706          & 0.408           & \textbf{0.914}    & 0.265             & 0.977         & \textbf{1.000*} & 0.928          & 0.928 \\
\textbf{KNN}     & \textbf{0.796} & \textbf{0.724}  & 0.958             & 0.151             & \textbf{0.990} & \textbf{1.000*} & \textbf{0.943} & \textbf{0.943} \\
\textbf{WKNN}    & \textbf{0.817} & \textbf{0.760*}  & \textbf{0.969}   & 0.269            & \textbf{0.993}  & \textbf{1.000*} & \textbf{0.943} & \textbf{0.943} \\
\textbf{LSVM}    & \textbf{0.872} & 0.424            & 0.964            & \textbf{0.316}    & \textbf{0.994}   & 0.999          & 0.929         & 0.929 \\ 
%\textbf{RBF\_SVM} & 0.273           & 0.020         & \textbf{0.985}   & \textbf{0.280} & 0.333            & 0.210 & 0.628 & 0.500 \\
\textbf{RF}      & 0.730          & 0.054           & \textbf{0.973}   & 0.246             & \textbf{0.999}  & \textbf{1.000*} & \textbf{0.949} & \textbf{0.949*}        \\ \hline
\textbf{Average} & 0.775          & 0.486           & 0.949            & 0.287             & 0.987           & 1.000           & 0.937          & 0.937 \\ \hline

\multicolumn{9}{c}{\textbf{ROC-AUC}} \\ \hline
% CLASSIFIERS     &  NORMAL                         & SPHERE                                & WINE                            & CANCER 
\textbf{TDABC-A} & 0.875          & 0.823            & 0.822           & 0.787              & 0.992         & \textbf{1.000*}    & 0.930             & 0.930 \\
\textbf{TDABC-M} & 0.866          & 0.821           & \textbf{0.905}   & \textbf{0.854}     & 0.991         & \textbf{1.000*}    & \textbf{0.939}   & \textbf{0.939} \\
\textbf{TDABC-R} & 0.864          & 0.881           & 0.822            & 0.733              & 0.990         & \textbf{1.000*}    & 0.930            & 0.930 \\
\textbf{KNN}     & \textbf{0.910} & \textbf{0.940}  & \textbf{0.937}   & \textbf{0.876}     & \textbf{0.998} & \textbf{1.000*}    & \textbf{0.945}    & \textbf{0.945} \\
\textbf{WKNN}    & \textbf{0.914} & \textbf{0.939}  & \textbf{0.948}   & \textbf{0.893}     & \textbf{0.999}  & \textbf{1.000*}   & \textbf{0.945}    & \textbf{0.945} \\
\textbf{LSVM}    & \textbf{0.929} & \textbf{0.973*}  & 0.941             & \textbf{0.930*}  & \textbf{0.997} & \textbf{1.000*}   & 0.930             & 0.930 \\ 
\textbf{RF}      & 0.789          & 0.605           & \textbf{0.950}   & \textbf{0.863}      & \textbf{0.999} & \textbf{1.000*}    & \textbf{0.951}    & \textbf{0.951*}        \\ \hline
\textbf{Average} & 0.878           & 0.855          & 0.904            & 0.848              & 0.995           & 1.000             & 0.938            & 0.938 \\ \hline

\end{tabular}%
}
\end{table}

\begin{table}[!htb]
\centering
\caption{Metric results per classifier across balanced datasets. In black those classifiers that
were superior to the arithmetic mean.}
\label{tb:table_balanced}
\resizebox{0.7\columnwidth}{!}{%
\begin{tabular}{l|l|l|l|l} \hline
\textbf{CLASSIFIERS} & \textbf{CIRCLES} & \textbf{MOON} & \textbf{SWISSROLL} & \textbf{IRIS} \\ \hline
\multicolumn{5}{c}{\textbf{F1}} \\ \hline

%CLASSIFIERS       & \textbf{CIRCLES}  & \textbf{MOON}      & \textbf{SWISSROLL} & \textbf{IRIS} \\ \hline
\textbf{TDABC-A}    & \textbf{0.620}    & \textbf{0.505}    & \textbf{0.829*}       & 0.932 \\
\textbf{TDABC-M}    & \textbf{0.620}    & \textbf{0.509*}   & \textbf{0.813}        & 0.922 \\
\textbf{TDABC-R}    & \textbf{0.592}    & \textbf{0.503}    & \textbf{0.805}        & \textbf{0.939} \\
\textbf{KNN}        & 0.449             & 0.445             & 0.738                 & \textbf{0.961} \\
\textbf{WKNN}       & 0.480             & 0.463             & \textbf{0.782}        & \textbf{0.951} \\
\textbf{LSVM}       & \textbf{0.688*}   & 0.426             & 0.716                 & \textbf{0.939} \\
\textbf{RF}         & \textbf{0.599}    & \textbf{0.501}    & 0.726                 & 0.926 \\ \hline
\textbf{Average}    & 0.578             & 0.478             & 0.773                 & 0.938 \\ \hline

\multicolumn{5}{c}{\textbf{ROC-AUC}} \\ \hline
\textbf{TDABC-A}    & \textbf{0.620}    & \textbf{0.505}    & \textbf{0.993}        & 0.986 \\
\textbf{TDABC-M}    & \textbf{0.620}    & \textbf{0.510*}   & 0.990                 & 0.986 \\
\textbf{TDABC-R}    & \textbf{0.632}    & \textbf{0.504}    & 0.991                 & 0.984 \\
\textbf{KNN}        & 0.460             & 0.445             & \textbf{0.992}        & \textbf{0.996} \\
\textbf{WKNN}       & 0.480             & 0.465             & \textbf{0.996*}       & \textbf{0.998*} \\
\textbf{LSVM}       & \textbf{0.640*}   & 0.428             & 0.991                 & \textbf{0.993} \\
\textbf{RF}         & \textbf{0.580}    & \textbf{0.501}    & \textbf{0.993}        & \textbf{0.994} \\ \hline
\textbf{Average}    & 0.576             & 0.479             & 0.992                 & 0.991 \\ \hline
\end{tabular}%
}
\end{table}
Additionally, we follow the experimental setting presented in \cite{LearningNeg} to assess the classifier's behavior under imbalanced data conditions. We generate 16 two-classes datasets in $\mathbb{R}^2$ by using the Normal distribution. The label 0 (positive) samples were generated using $\mu=0$, $\sigma=1.1$, and samples of label 1 (negative) were generated using $\mu=2.0$, $\sigma=2.2$. We start generating a dataset with 100 samples, 50 per class, then we maintain the same 50 samples on the positive class and increasing the negative class with 50 samples up to 800. We perform repeated cross-validation in each dataset, computing the F1, PR-AUC, ROC-AUC, TNR, FPR, and GMEAN metric averages and their standard deviation (vertical lines). The Figure~\ref{fig:auc_curve} shows the experimental setting results per classifier.  

 \begin{figure}
\centering
\includegraphics[width=1\columnwidth]{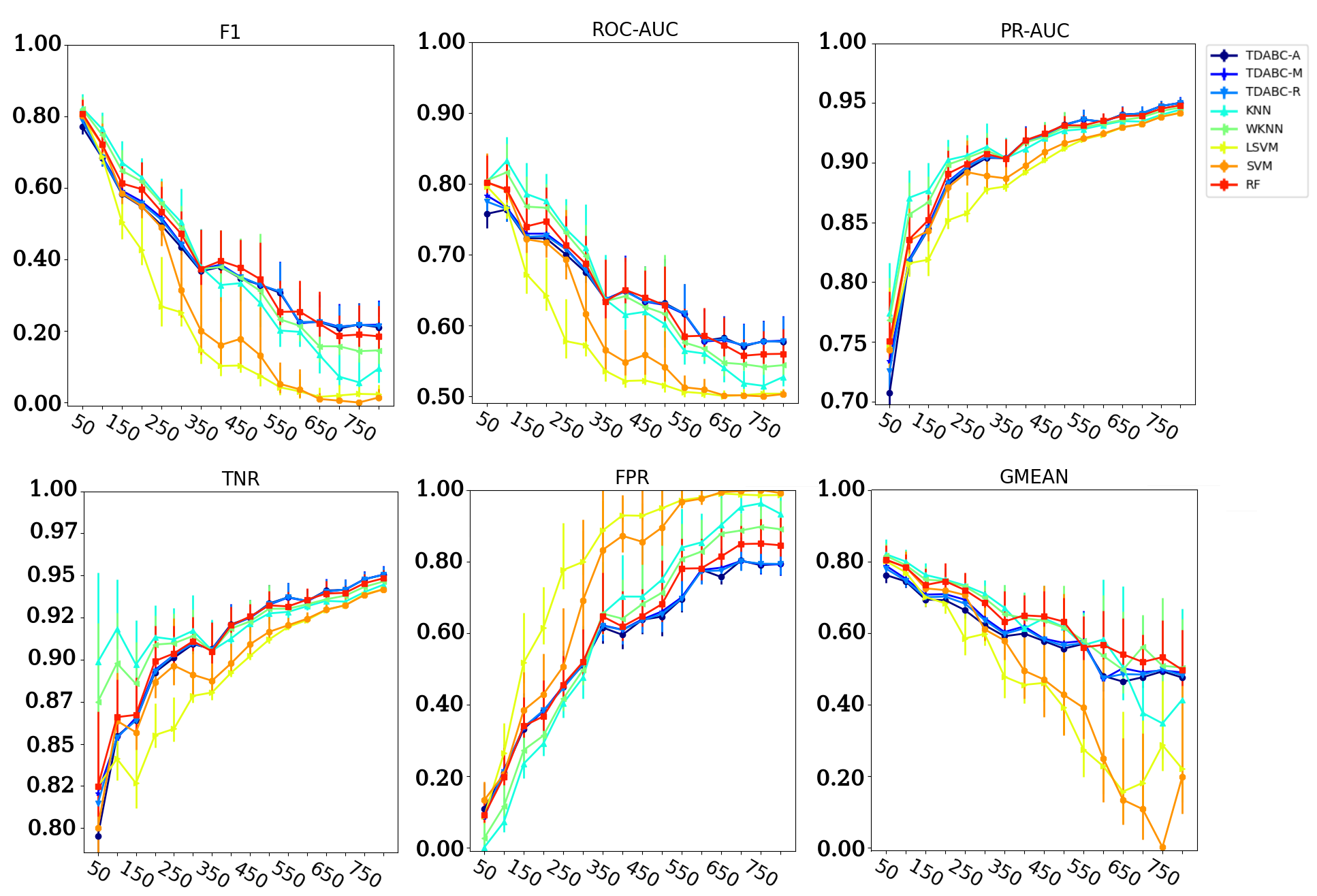}
\caption{We generate 16 datasets from 50-50 to 50-800 to evaluate how classifiers behave under several imbalance conditions. We present results of the F1, ROC-AUC, PR-ROC, TNR, FPR, GMEAN metrics in the minority class.}\label{fig:auc_curve}
\end{figure}

\subsection{Analysis}\label{sec12}

In Figure~\ref{fig:all_metric} we present results of all metrics across imbalanced and balanced datasets. 

Overall, TDABC is as good as baseline methods, but in the most imbalanced or entaggle datasets, TDABC surpasses the others. This advantage become evident in Sphere, where the imbalance ratio is up to 41:1 considering the maximal and minority classes (500:12). In a dataset with lower imbalanced ratios like Cancer and Wine, the same pattern arises with the TDABC providing similar results as baseline classifiers. Table~\ref{tb:tabla_imbalanced} shows detailed results for F1, PR-AUC, and ROC-AUC as a confirmation that TDABC behaves better classifying minority classes when a high imbalanced ratio appears. The behavior under highly imbalanced data can be seen in Figure~\ref{fig:auc_curve} where we present results under 16 different imbalanced conditions. Again, it can be seen that in F1, ROC-AUC, PR-AUC, TNR, and GMEAN metrics, the TDABC methods are conservatives from 1:1 (50:50) to 7:1 (350:50) imbalanced ratios, respectively. From 7:1 to 16:1 (800:50), TDABC becomes better than baseline methods. 

In FPR plots in Figure~\ref{fig:auc_curve}, we show that the behavior in the minority class is increased from 350:50 upwards, which means that baseline methods get wrong classifying minority classes more often than TDABC does. Interestingly, the standard deviation in our proposed methods are more stable than the baseline ones.\par

%This difference is not so notorious in Normal, where the high dimensionality seems to favor KNN and WKK. 

Experiments on balanced datasets, see Table~\ref{tb:table_balanced}, show that the proposed TDABC methods behave competitively concerning baseline classifiers (better than the average), even with broadly used classifiers such as SVM and RF. Specifically, when topology becomes complex like in the Swissroll case, where no hyperplane correctly identifies classes, TDABC shows better F1, ROC-AUC, PR-AUC, TNR, FPR, and GMEAN metrics than all reference methods. Type 1 errors or FPR is lower in our TDABC in comparison to the others classifiers in all databases except in Iris datasets and Swissroll, as we show in Figure~\ref{fig:all_metric}. \par

We also applied GMEAN to give a better measure of the combined growth of TNR and Recall. Our method shows in Figure~\ref{fig:all_metric},  higher values of GMEAN on datasets with a high imbalanced ratio like NORM and Sphere and in balanced datasets with non-linearly separable classes like Circles and Moon. In general, in each dataset, our method surpasses at least one of the baseline methods in GMEAN. The TNR measurement results in Figure~\ref{fig:all_metric} also show that our TDABC behaves well because our method differentiated negative instances similarly to RF and SVM and better than KNN and WKNN. Again, in balanced datasets with overlapped classes, our method overcame the others except for SVM in Circles, but in Moon, which has more entanglement between classes and more elements, our method was better. \par 

The Circles and Moon datasets are balanced and have very entangled classes due to the noise factor, making the classification more challenging. In these datasets, k-NN ($F1=0.449$ and $F1=0.445$) and wk-NN ($F1=0.480$ and $F1=0.463$) behave poorly (lower than average). This behavior is related to the fixed value of k and the assumption that each data point is equally relevant. Even though wk-NN imposes a local data point weight based on distances, it is not enough with highly entangled classes, as our results show. The TDABC methods are capable of dealing with the entanglement challenge through a disambiguation factor based on filtration values ($\xi_{\mathcal{K}}$). The Iris dataset is a simple case where practically all methods perform well. \par

%%%%%%%%%%%%%
Regarding the proposed TDABC method, we highlight that building a filtered simplicial complex to model high order data relationships enrich analysis but imposes the challenge to extract a meaningful collection of connections in a subcomplex. To perform this extraction the proposed three selection functions take advantage of the ability of persistent homology to unravel topological patterns hidden in data.

Specifically the link and star operators are used to recover variable-sized and multi-dimensional neighborhoods on the selected subcomplex, in contrast to fixed k-neighborhoods of points provided by KNN and WKNN. Furthermore for classification problems we proposed filtration values as local weights to give importance to each label contribution encoding the entire filtration history, giving less importance to outlier values and high to dense regions. 

As shown, the extraction and local weights combination effectively dealt with imbalanced datasets and with overlapped classes in balance on the other hand. 

%The Border cases proposed in Section~\ref{scc:border} also consider a semi-supervised setting because the link bounds the searching space of the TDABC method, making the semi-supervised scenario to be the same. More insights about the propagation method convergence need to be done shortly. 

%~*~~*~~*~~*~~*~~*~~*~~*~~*~~*~~*~~*~~*~~*~~*~~*~~*~~*~~*~~*~~*~~*~~*~~*~~*~
%%%~*~~*~~*~~*~~*~~*~~*~~*~~*~~*~* CONCLUSIONS *~*~~*~~*~~*~~*~~*~~*~~*~~*~~*~~*~~
%~*~~*~~*~~*~~*~~*~~*~~*~~*~~*~~*~~*~~*~~*~~*~~*~~*~~*~~*~~*~~*~~*~~*~~*~~*~

\section{Conclusion}\label{scc:conclusions}

This paper presents a TDA based approach to classify imbalanced and balanced multi-class datasets. The proposed method is capable to classify multi-class data avoiding to solve multiple binary classification problems. The proposed classifier is a novel application of TDA for classification tasks without any additional ML method, in contrast with common TDA usages as a topological descriptor provider for ML. To our knowledge, this is the first study that proposes this approach for classification. Three methods were presented, TDABC-A, TDABC-R, TDABC-M, to address the binary and multi-imbalance data classification problems without any re-sampling preprocessing. Overall our methods behave better in average than the baseline methods in overlapped and minority classes.

Our work shows that PH plays a key role on selecting a sub-complex which approximate well enough data topology, through the {\it MaxInt}, {\it AvgInt} or {\it RandInt} selection functions. Moreover, link and star operators provide dynamic neighborhoods for classification. This work presents an application for the filtration values as the inverse weight to measure each simplex label contribution. These weights have several properties to deal with overlapped classes such as distance encoding operators, indirect local outlier factors, and density estimators. As a result, the labelling function depends on the filtration full history of the filtered simplicial complex and is encapsulated within the persistent diagrams at various dimensions.

Further, the labeling function also provides the probability of belonging to each class allowing, for instance, the identification and correction of mislabeled points in future work.

\section*{Supplementary information}

% If your article has accompanying supplementary file/s please state so here. 

% Authors reporting data from electrophoretic gels and blots should supply the full unprocessed scans for key as part of their Supplementary information. This may be requested by the editorial team/s if it is missing.

% Please refer to Journal-level guidance for any specific requirements.

The source code used in this article is available \url{https://github.com/rolan2kn/TDABC-4-ADAC.git} in Github.

\section*{Acknowledgments}

% Acknowledgments are not compulsory. Where included they should be brief. Grant or contribution numbers may be acknowledged.

% Please refer to Journal-level guidance for any specific requirements.

This research work was supported by the National Agency for Research and Development of Chile (ANID), with grants ANID 2018/BECA DOCTORADO NACIONAL-21181978, FONDECYT 1211484, 1221696, ICN09\_015, and PIA ACT192015. Beca postdoctoral CONACYT (Mexico) also supports this work. The first author would like to thank professor Jos\'e Carlos G\'omez-Larra\~naga from CIMAT, Mexico, for his insightful discussions.

input(template.bbl)
\printbibliography
\end{document}